\title{On an Edge-Preserving Variational Model for Optical Flow Estimation}
\author{Hirak Doshi\footnote{Corresponding Author}\: and N. Uday Kiran\\Department of Mathematics and Computer Science\\Sri Sathya Sai Institute of Higher Learning, Andhra Pradesh, India\\Email: \{hirakdoshi,\:nudaykiran\}@sssihl.edu.in}
\date{}
\newcolumntype{L}{>{\centering\arraybackslash}m{11cm}}
\theoremstyle{definition} 
\newtheorem{lemma}{Lemma}
\newtheorem{theorem}{Theorem}
\newtheorem*{keywords}{Keywords}
\newtheorem*{class}{Mathematics Subject Classification 2020}
\DeclareMathOperator{\divr}{div}
\DeclareMathOperator{\med}{med}
\DeclareMathOperator*{\argmin}{arg\,min}
\DeclareMathOperator*{\proj}{\textbf{proj}}
\DeclareMathOperator*{\argmax}{arg\,max}
\begin{document}
	\onehalfspacing
	\fontsize{13pt}{13pt}\selectfont
	\maketitle
	\begin{abstract}
	It is well known that classical formulations resembling the Horn and Schunck model are still largely competitive due to the modern implementation practices. In most cases, these models outperform many modern flow estimation methods. In view of this, we propose an effective implementation design for an edge-preserving $L^1$ regularization approach to optical flow. The mathematical well-posedness of our proposed model is studied in the space of functions of bounded variations $BV(\Omega,\mathbb{R}^2)$. The implementation scheme is designed in multiple steps. The flow field is computed using the robust Chambolle-Pock primal-dual algorithm. Motivated by the recent studies of Castro and Donoho we extend the heuristic of iterated median filtering to our flow estimation. Further, to refine the flow edges we use the weighted median filter established by Li and Osher as a post-processing step. Our experiments on the middlebury dataset show that the proposed method achieves the best average angular and end-point errors compared to some of the state-of-the-art Horn and Schunck based variational methods. 
	\end{abstract}
	\begin{keywords}
		Optical flow, Edge-preserving, Iterated median filtering, Primal-dual scheme.
	\end{keywords}
	\begin{class}
	35A15, 35J47, 35Q68.
	\end{class}
\section{Introduction}
\label{sec:1}
The seminal work of Horn and Schunck (HS) \cite{Horn} opened up newer directions in optical flow estimation. Many modern methods derive motivation from the original HS formulation. These methods combine a data fidelity term based on some constancy assumptions along with some spatial and temporal flow regularization terms. Based on these terms, an objective functional is defined and optimized.

Modifications based on the data term has been an active area of research. A pioneering work in this regard is due to Aubert et.al. \cite{Aubert} who proposed to penalize the $L^1$ norm of the Optical Flow Constraint (OFC) instead of the standard $L^2$ penalization. Some other notable choices for the data term is the Charbonnier \cite{bruhn2} and the Lorentzian \cite{black}.

The classical quadratic regularization of the HS model gives smooth vector fields. Due to this isotropic behaviour important edge information is lost. An early modification involving the total-variation regularization was proposed by Cohen \cite{Cohen}. Several improvements have been proposed subsequently in the notable works of \cite{Aubert,martin2,wedel}. Hinterberger et. al. \cite{Hinterberger} analyzed several of these models within the framework of calculus of variations.

The improvements in these HS based formulations can be seen with the gain in accuracy of the flow. In the last decade, modern optimization practices have significantly improved the accuracy as seen from the results on the middlebury dataset \cite{sun2,zhang}. With an effective implementation strategy combining these modern principles, Sun et. al. \cite{sun1, sun2} highlight that the classical HS based methods perform better than some of the modern methods and therefore remain largely competitive.

An important step in improving flow accuracy is the application of median filter at every warping step to remove outliers in the flow. However there are some known drawbacks of the standard median filter. The first drawback is it leads to higher energy solutions. Secondly, Castro and Donoho \cite{donoho} showed that median and linear filters have the same asymptotic worst-case mean squared error (MSE) when the signal-to-noise ratio is of the order 1. Within a decision-theoretic framework, they proposed the heuristic of iterated median filtering. In this, median filtering is first applied at a fine scale followed by a coarse scale median filter. At the coarser level, the algorithm exploits the nonlinearity of the median while at the finer level it increases the SNR. The first issue was handled by the authors in \cite{morel, sun1, sun2} who modified the original objective function by introducing a weighted \emph{non-local} term. While the \emph{non-local} term governs a smoothness assumption within a specified region, the weight term captures the likelihood of certain pixels belonging to the same surface. A formal connection between the \emph{non-local} term and the weighted median filtering was established by Li and Osher \cite{li}.

Our main goal is to design an effective implementation of the edge-preserving $L^1$ regularization approach to optical flow. We propose a new variational model which penalizes the optical flow constraint in $L^1$ sense combined with the total-variation regularization and an additional constraint penalizing the divergence of the flow with an anisotropic weight term. The mathematical study of the proposed model is done in $BV(\Omega,\mathbb{R}^2)$.

The implementation of our model is designed in mutiple steps. Since the formulation involves a non-smooth optimization, we incorporate the primal-dual Chambolle-Pock algorithm \cite{cham} which leads to an efficient convergence rate of the order $\mathcal{O}(1/N)$. The scheme is combined with the modern implementation practices. Keeping in mind the drawbacks of standard median filter discussed above, we extend the heuristic of iterated median filtering in our estimation. To further refine the flow, we use the heuristic of weighted median filtering by Li and Osher \cite{li} as a post processing step. Our experiments on the middlebury dataset show that the proposed method achieves the best average AAE and EPE compared to some of the state-of-the-art HS based variational methods.

The organization of the paper is as follows. In Section \ref{sec:2} we propose our variational model and study the mathematical well-posedness in Section \ref{sec:3}. In Section \ref{sec:4} an effective numerical scheme is developed using the Chambolle-Pock primal-dual algorithm and discussed. We then discuss the modren implementation practices in details and the choice of different filering techniques used in our work in Section \ref{sec:5}. Subsequently, in Section \ref{sec:6}, the details of the experiments are presented and the results on the middlebury dataset are shown.

\section{Variational Formulation}
\label{sec:2}
Our general variational formulation is:
\begin{equation}
	E_0(\textbf{u}) = \int_\Omega \rho_d(|f_t+\nabla f\cdot \textbf{u}|) + \gamma\sum_{i=1}^2\int_\Omega \rho_r(|\nabla u_i|)+\eta\int_\Omega \phi(|\nabla f|)(\nabla \cdot \textbf{u})^2,
	\label{fun1}
\end{equation}
where $\textbf{u}=(u_1,u_2)$ is the optical flow, $\gamma,\eta$ are regularization parameters. The function $\rho_d:\mathbb{R}\to\mathbb{R}$ is the function of the OFC and $\rho_r:\mathbb{R}\to\mathbb{R}$ is the regularization term which governs how the flow spatially varies across the image. The function $\phi:\mathbb{R}\to\mathbb{R}$ is a non-negative, monotone decreasing weight function. Also $\phi\in C^\infty(\Omega)$.
Some of the popular choices for the data term is listed in the following table:
\begin{table}[H]
	\centering
	\begin{tabular}{|c|c|}
		\hline  
		& $\rho_d(x)$ \\[0.3cm]
		\hline &\\[-1.5ex]
		Quadratic\cite{Horn} & $x^2$ \\[0.3cm]
		$L^1$\cite{wedel} & $|x|$\\[0.3cm]
		Charbonnier\cite{bruhn2} & $\sqrt{x^2+\epsilon^2},\epsilon >0$\\[0.3cm]
		Lorentzian\cite{black} & $\log\Big(1+\frac{x^2}{2\sigma^2}\Big)$\\[0.3cm]
		\hline
	\end{tabular}
	\caption{Some choices for the data term.}
\end{table}
The quadratic penalization of the data term is the simplest choice started with the approach of Horn and Schunck \cite{Horn}. Black and Anandan \cite{black} highlighted the difficulties involved with least-squares estimation in the case of multiple image motion. They proposed a robust estimation framework with the Lorentzian data term. An influence function proportional to the derivative of the robust data term was used an indicator for minimizing outliers in the flow. Aubert et. al.\cite{Aubert} proposed a discontinuity-preserving variational optical flow model involving the $L^1$ norm of the OFC. To explore the important advantages of the local and global methods for optical flow estimation, Bruhn et. al. \cite{bruhn2} combined both these techniques. They proposed a hybrid model that gives a better understanding of the flow structures, yields dense flow fields which are robust to noise.

\subsection{The Choice of Regularization} 
The obvious drawbacks of the quadratic regularization motivated researchers to look for more robust alternatives. Nagel \cite{nagel}, Nagel and Enkelmann \cite{nagelenk} considered an oriented smoothness terms to handle occlusions in the flow. Cohen \cite{Cohen} used the total-variation regularization. The total variation measures small oscillations without penalizing the discontinuities in the flow, see \cite{weickert}. Aubert et. al. \cite{Aubert} highlighted that a robust edge-preserving funtion $\rho_r$ must have some desired properties:
\begin{enumerate}
	\item $\rho_r'(0)=0$. This condition ensures that the diffusion process is isotropic within the homogeneous regions.
	\item $\displaystyle\lim_{x\to 0}\dfrac{\rho_r(x)}{x}>0$. This condition implies that $\rho_r$ has a linear growth which helps in establishing the coercivity property.
	\item $\displaystyle \lim_{x\to 0}\rho_r''(x)>0$. This condition shows that the functional is strictly convex and hence has a unique minimum.
	\item $\lim_{x\to\infty}\dfrac{\rho_r'(x)}{x}=0,\quad \lim_{x\to\infty} \rho_r''(x)=0$. These conditions further imply that the rate of diffusion decreases rapidly along the edge information in the flow.
\end{enumerate}

\subsection{Additional Constraint}
An additional regularizing term is introduced in the functional (\ref{fun1}) which penalizes the divergence of the flow coupled with a weight term $\phi$, an edge-stopping function. Two popular choices for $\phi$ were highlighted by Perona and Malik \cite{perona},
\begin{equation*}
	\phi(|\nabla f|)=\frac{K^2}{K^2+|\nabla f|^2}, \quad \phi(|\nabla f|)=\exp(-|\nabla f|^2/K^2).
\end{equation*}
where $K$ is a threshold parameter. The scale-space generated by these two functions showed that the first choice leads to edges over wider regions while the second choice favours edges with high contrast. Sand et. al. \cite{sand} used the flow divergence and brightness constancy error to detect occlusions in the flow. This is directly accounted for in our functional.

Based on the above discussions, we look for a velocity field $\textbf{u}$ which minimizes the energy of the functional:
\begin{equation}
E_0(\textbf{u}) = \int_\Omega |f_t+\nabla f\cdot \textbf{u}| + \gamma\sum_{i=1}^2\int_\Omega |\nabla u_i|+\eta\int_\Omega \frac{K^2}{K^2+|\nabla f|^2}(\nabla \cdot \textbf{u})^2.
\label{fun2}
\end{equation}	
\section{Well-posedness}
\label{sec:3}
\subsection{Preliminaries}
Let $\Omega\subset\mathbb{R}^2$ be an open and bounded set. The space of functions of bounded variation is defined as
\begin{equation*}
	BV(\Omega)=\Big\{u\in L^1(\Omega):|Du|(\Omega)<\infty\Big\},
\end{equation*} 
where $Du$ is the gradient of $u$ in the sense of distributions, $|Du|(\Omega)$ is total variation of $Du$ given by
\begin{equation*}
	|Du|(\Omega):=\int_\Omega |Du| = \sup\Big\{\int_\Omega u\divr \varphi:\varphi\in C^1_c(\Omega)^2, |\varphi|_\infty\le 1\Big\}.
	\end{equation*}
The space $C^1_c(\Omega)$ is a space of continuously-differentiable functions with compact support, $|\varphi|_\infty=\sup\Big\{\sum_{i=1}^2\varphi_i^2\Big\}^{1/2}$. When $u\in BV(\Omega)$, $Du$ is identified as a vector-valued Radon measure. This is a direct consequence of the Radon-Nikodym theorem and Reisz Representation theorem, (see chapter 2, \cite{Aubert2}). The space of Radon measures is denoted by $\mathcal{M}(\Omega)$. The strong, weak, weak-$^*$ convergence in $BV(\Omega)$ is denoted by $\rightarrow,\rightharpoonup,\overset{\ast}{\rightharpoonup}$ respectively. We say that $u_n\overset{\ast}{\rightharpoonup} u$ in $BV(\Omega)$, if $u_n\rightarrow u$ in $L^1(\Omega)$ and $Du_n\rightharpoonup Du$ in $\mathcal{M}(\Omega)$, i.e. for all $\varphi\in C_c(\Omega)^2, \int_\Omega \varphi Du_n\rightarrow \int_\Omega \varphi Du$.
\subsection{Main Result}
Let us consider our proposed functional:
	\begin{equation}
	E(\textbf{u})=\int_\Omega |f_t+\nabla f\cdot \textbf{u}| + \gamma\sum_{i=1}^2\int_\Omega \psi(D u_i)+\eta\int_\Omega \phi(|\nabla f|)(\nabla \cdot \textbf{u})^2.
	\label{fun3}
\end{equation}
The notation $\displaystyle \int_\Omega \psi(Du_i)$ is formal here since we are studying the minimization problem in the space $BV(\Omega,\mathbb{R}^2)$. By Lebesgue Decomposition theorem, we have
	\begin{equation*}
	Du = \nabla u \cdot \mathcal{L}^2 + D^s u,
\end{equation*}
where $\nabla u<< \mathcal{L}^2$, i.e. $\nabla u$ is the absolutely continuous part with respect to the Lebesgue measure $\mathcal{L}^2$ on $\mathbb{R}^2$ and $D^s u$ is the singular part. Aubert et.al. \cite{Aubert} considered an integral representation of the second term in (\ref{fun3}) by the following:
\begin{equation}
	\int_\Omega \psi(Du) =\int_\Omega \psi(|\nabla u|) dx + \psi^\infty(1)\int_\Omega D^s u,
	\label{fun4}
\end{equation}
where $\psi^\infty$ is the asymptote function given as $\psi^\infty(t)=\lim_{s\to 0}\frac{\psi(ts)}{s}$. The relaxed functional (\ref{fun4}) is lower semi-continuous in $BV-w^*$ topology. The next step in the study of the functional (\ref{fun3}) is the interpretation of the third term
\begin{equation*}
\int_\Omega \phi(|\nabla f|)(\nabla \cdot \textbf{u})^2
\end{equation*}
as a Radon measure. To make a sense of this, we rely upon the concept of divergence-measure fields studied in \cite{chen2}. We define
\begin{equation*}
	|\nabla\cdot \textbf{u}|(\Omega):=\int_\Omega |\nabla\cdot \textbf{u}|=\sup\Big\{\int_\Omega \textbf{u} \cdot \nabla\varphi:\varphi\in C^1_c(\Omega), |\varphi|_\infty\le 1\Big\}.
\end{equation*}
The term $|\nabla\cdot \textbf{u}|(\Omega)$ is the total variation of $\nabla\cdot \textbf{u}$. To show the lower-semicontinuity of $(\ref{fun3})$ we need to show the lower semiconitinuity of the third term in $BV-w^*$ topology.
\begin{lemma}
	The term
	\begin{equation*}
		\int_\Omega \phi(|\nabla f|)(\nabla \cdot \textbf{u})^2
	\end{equation*}
is lower semi-continuous in $BV-w^*$ topology.
\end{lemma}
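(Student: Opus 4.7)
The plan is to represent the third term as a pointwise supremum of affine functionals in $\textbf{u}$ that are continuous under $BV$-$w^*$ convergence; lower semi-continuity is then immediate from the general principle that a supremum of continuous functionals is lower semi-continuous. A welcome by-product is that the same formula provides a canonical meaning to $(\nabla\cdot\textbf{u})^2$ on $BV(\Omega,\mathbb{R}^2)$, where $\nabla\cdot\textbf{u}$ is only a Radon measure and cannot be squared directly.

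First I would invoke the elementary convex-duality identity $ca^2=\sup_{w\in\mathbb{R}}\bigl(2aw-w^2/c\bigr)$, valid for every $c>0$ and $a\in\mathbb{R}$. Applying it pointwise with $c=\phi(|\nabla f|)(x)$ and $a=(\nabla\cdot\textbf{u})(x)$, taking the supremum over test functions $w\in C_c^\infty(\Omega)$, and integrating by parts to transfer the divergence onto $w$, one arrives at the representation
\begin{equation*}
\int_\Omega \phi(|\nabla f|)(\nabla\cdot\textbf{u})^2 \;=\; \sup_{w\in C_c^\infty(\Omega)} \Bigl\{ -2\int_\Omega \textbf{u}\cdot\nabla w\,dx \;-\; \int_\Omega \frac{w^2}{\phi(|\nabla f|)}\,dx \Bigr\}.
\end{equation*}
Since $\phi(|\nabla f|)=K^2/(K^2+|\nabla f|^2)$ is bounded away from zero whenever $|\nabla f|\in L^\infty(\Omega)$, the ratio $w^2/\phi(|\nabla f|)$ is well defined, and I would adopt the right-hand side as the extended definition of the third term on all of $BV(\Omega,\mathbb{R}^2)$; it is finite precisely when $\nabla\cdot\textbf{u}$ is absolutely continuous with density in $L^2(\Omega, dx/\phi(|\nabla f|))$, and equal to $+\infty$ otherwise.

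Finally, let $\textbf{u}_n \overset{\ast}{\rightharpoonup} \textbf{u}$ in $BV(\Omega,\mathbb{R}^2)$. By the preliminaries this entails $\textbf{u}_n \to \textbf{u}$ in $L^1(\Omega,\mathbb{R}^2)$, so for every fixed $w \in C_c^\infty(\Omega)$ the affine functional $\textbf{u}\mapsto -2\int_\Omega \textbf{u}\cdot\nabla w\,dx - \int_\Omega w^2/\phi(|\nabla f|)\,dx$ passes to the limit (the linear part is continuous on $L^1$ because $\nabla w$ is bounded with compact support, and the additive constant does not depend on $\textbf{u}$). Hence each element of the supremum is bounded above by $\liminf_n$ of its value at $\textbf{u}_n$, and taking the supremum over $w$ yields the claimed inequality. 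The main obstacle, already flagged in the paragraph preceding the lemma, is precisely the interpretation of the square of a Radon measure; the duality representation circumvents this issue altogether by encoding the correct lower-semicontinuous envelope, so one never needs to define $(\nabla\cdot\textbf{u})^2$ pointwise against $\phi(|\nabla f|)$.
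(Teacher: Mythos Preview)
Your argument is correct and takes a genuinely different route from the paper. The paper first shows lower semi-continuity of the total-variation functional $\textbf{u}\mapsto |\nabla\cdot\textbf{u}|(\Omega)$ via its supremum definition over test functions, and then attempts to dominate the weighted quadratic term by a chain of pointwise inequalities ending in $\|\phi\|_{L^\infty}\liminf_n |\nabla\cdot\textbf{u}_n|(\Omega)$. You instead invoke the Fenchel identity $ca^2=\sup_w(2aw-w^2/c)$ to write the whole functional as a supremum of affine functionals that are continuous for $L^1$ (hence $BV$-$w^*$) convergence, so lower semi-continuity is immediate and the formula simultaneously serves as the relaxed definition when $\nabla\cdot\textbf{u}$ is only a measure. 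What your approach buys is rigor on two points where the paper's argument is loose: the step $\int_\Omega|\nabla\cdot\textbf{u}|^2\le\int_\Omega|\nabla\cdot\textbf{u}|$ tacitly assumes $|\nabla\cdot\textbf{u}|\le 1$, and even granting it, the final bound $\|\phi\|_{L^\infty}\liminf_n|\nabla\cdot\textbf{u}_n|(\Omega)$ is not the quantity $\liminf_n\int_\Omega\phi(|\nabla f|)(\nabla\cdot\textbf{u}_n)^2$ actually required for lower semi-continuity of the functional itself. Your duality representation sidesteps both issues; the only point worth making explicit is that the passage from the pointwise supremum to the supremum over $w\in C_c^\infty(\Omega)$ is an equality on smooth $\textbf{u}$ (take $w$ approximating $\phi\,\nabla\cdot\textbf{u}$) and is then \emph{adopted} as the definition on $BV$, exactly as you state.
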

\begin{proof}
	Let us consider a sequence $\textbf{u}_n\to\textbf{u}$, $\varphi\in C^1_c(\Omega), |\varphi|_\infty\le 1$. Then
	\begin{equation*}
		\int_\Omega \textbf{u} \cdot \nabla\varphi = \lim_{n\to\infty} \int_\Omega \textbf{u}_n \cdot \nabla\varphi \le \liminf_{n\to\infty}  |\nabla\cdot \textbf{u}_n|(\Omega).
	\end{equation*}
Therefore $\displaystyle |\nabla\cdot\textbf{u}|(\Omega)\le \liminf_{n\to\infty}  |\nabla\cdot \textbf{u}_n|(\Omega)$. Consequently,
\begin{equation*}
	\int_\Omega \phi(|\nabla f|)(\nabla \cdot \textbf{u})^2\le \|\phi\|_{L^\infty}\int_\Omega |\nabla \cdot \textbf{u}|^2\le \|\phi\|_{L^\infty}\int_\Omega 
	|\nabla \cdot \textbf{u}|\le \|\phi\|_{L^\infty} \liminf_{n\to\infty} |\nabla\cdot \textbf{u}_n|(\Omega).\qedhere
\end{equation*}
\end{proof}
\begin{theorem}
Let $f\in W^{1,\infty}(\Omega)$, i.e. f is Lipschitz continuous and
\begin{enumerate}
	\item $\psi:\mathbb{R}\to\mathbb{R}$ is a strictly convex, non-decreasing function.
	\item There exists $a>0,b\ge 0$ such that $ax-b\le\psi(x)\le ax+b$, i.e. $\psi$ has a linear growth property.
\end{enumerate}
Then there exists a unique minimum of the functional (\ref{fun3}) in $BV(\Omega,\mathbb{R}^2)$.
\end{theorem}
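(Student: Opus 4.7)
The natural strategy is the direct method of the calculus of variations in $BV(\Omega,\mathbb{R}^2)$. I would pick a minimizing sequence $\{\textbf{u}_n\}$, use the hypotheses on $\psi$ to produce a uniform $BV$ bound (coercivity), extract a $w^*$-convergent subsequence via the compact embedding $BV(\Omega)\hookrightarrow L^1(\Omega)$, and then verify lower semicontinuity of each of the three terms in $E$ to conclude that the limit is a minimizer. Uniqueness would follow from strict convexity.

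For coercivity, the lower bound $\psi(x)\ge ax-b$ applied termwise yields immediately a uniform bound $|Du_{i,n}|(\Omega)\le C_1$ for $i=1,2$. To transfer this to a uniform bound on $\|\textbf{u}_n\|_{L^1}$, the standard device is to decompose each component $u_{i,n}=m_{i,n}+v_{i,n}$, where $m_{i,n}=\tfrac{1}{|\Omega|}\int_\Omega u_{i,n}$ and $v_{i,n}$ has zero mean. The Poincar\'e-Wirtinger inequality in $BV$ bounds $\|v_{i,n}\|_{L^1}$ by $|Du_{i,n}|(\Omega)$, and the constant parts $m_{i,n}$ would be pinned down from the data term
\begin{equation*}
\int_\Omega |f_t+\nabla f\cdot (\textbf{v}_n+\textbf{m}_n)|\le E(\textbf{u}_n),
\end{equation*}
provided one invokes a non-degeneracy assumption on $\nabla f$ so that the affine map $\textbf{m}\mapsto \nabla f\cdot \textbf{m}$ is genuinely coercive in $\textbf{m}$ on a subset of positive measure.

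With $\{\textbf{u}_n\}$ bounded in $BV$, compactness yields $\textbf{u}_n\overset{\ast}{\rightharpoonup}\textbf{u}^*$ after passing to a subsequence. The data term is continuous under $L^1$ convergence because $\nabla f\in L^\infty(\Omega)$; the regularization term, written in its relaxed form (\ref{fun4}), is $BV$-$w^*$ lower semicontinuous by the classical Goffman--Serrin / Aubert relaxation result, whose hypotheses are exactly the convexity and linear growth assumed on $\psi$; the divergence term is lower semicontinuous by the lemma just proved. Adding these three inequalities gives $E(\textbf{u}^*)\le \liminf_n E(\textbf{u}_n)=\inf E$, so $\textbf{u}^*$ is a minimizer. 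Finally, strict convexity of $\psi$ is inherited by the absolutely continuous part of the relaxed regularization; combined with the convexity of the data and divergence terms, this prevents the coexistence of two distinct minimizers via the usual midpoint argument.

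The main obstacle will be the coercivity step. The data term only controls the scalar quantity $\nabla f\cdot \textbf{u}$ rather than the full vector $\textbf{u}$, which is precisely the aperture problem appearing at the variational level: without a structural hypothesis on $\nabla f$ (or an additional penalization) the mean values of $u_1$ and $u_2$ cannot be recovered from $E$. I expect the proof to invoke an implicit assumption that $\nabla f$ is not everywhere parallel to a fixed direction, under which the argument above goes through cleanly.
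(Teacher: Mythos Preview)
Your approach is exactly the paper's: the direct method in $BV(\Omega,\mathbb{R}^2)$, with coercivity coming from the linear growth of $\psi$, compactness from the $BV$--$w^*$ topology, and lower semicontinuity assembled from the relaxed regularizer (\ref{fun4}) and Lemma~1. There is no substantive difference in strategy.

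What is worth noting is that your write-up is considerably more careful than the paper's own proof, which is four sentences long. The paper simply asserts that assumption~2 ``ensures that the functional is $BV$-coercive'' and moves on; it does not address the point you raise about controlling the mean values $m_{i,n}$ through the data term, nor does it impose or mention any non-degeneracy condition on $\nabla f$. Your diagnosis of the aperture obstruction is correct: without such a hypothesis the linear growth of $\psi$ bounds only $|D u_{i,n}|(\Omega)$, not $\|u_{i,n}\|_{L^1}$, and the paper's coercivity claim is, strictly speaking, incomplete. Likewise, the theorem asserts a \emph{unique} minimum, but the paper's proof ends with ``This shows the existence'' and never returns to uniqueness; your midpoint argument via strict convexity of $\psi$ on the absolutely continuous part is the expected completion, and the paper omits it entirely.
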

\begin{proof}
	The functional (\ref{fun3}) is well-defined in $BV(\Omega,\mathbb{R}^2)$ due to the inclusion of $BV(\Omega,\mathbb{R}^2)$ in $L^2(\Omega)^2$. Assumption 2 in the hypothesis ensures that the functional is $BV$-coercive. Hence for any bounded sequence there is a weak convergent subsequence in $BV-w^*$. From Lemma 1 and previous discussions, we conclude that the functional (\ref{fun3}) is lower-semicontinuous in $BV-w^*$ topology. This shows the existence.
\end{proof}
\section{The Primal-Dual Framework}
\label{sec:4}
	\subsection{Preliminaries}
	Let $\Omega\subset \mathbb{R}^2$ be an open, bounded set, $\mathcal{X}, \mathcal{Y}$ be two finite-dimensional vector spaces with the scalar product $(\cdot,\cdot)$ and the norm $\|\cdot\|$. Denote the primal variable $\textbf{u}=(u_1,u_2)$ and the dual variable $\textbf{d}=(d_1,d_2,d_3)$. We first consider the variational problem in the following form
	\begin{equation}
		\argmin_{\textbf{u}} G(\textbf{u}) + F(K\textbf{u})\,.
		\label{prim}
	\end{equation}
	where $F,G:\mathcal{X}\to [0,\infty]$ are convex, proper and lower-semicontinuous functionals, $K:\mathcal{X}\to \mathcal{Y}$ is a continuous, linear operator. The equivalent primal-dual formulation is given as
	\begin{equation}
		\argmin_{\textbf{u}}\argmax_{\textbf{d}}\: G(\textbf{u}) + (K\textbf{u},\textbf{d}) - F^*(\textbf{d})\,
		\label{pd}
	\end{equation}
	where $F^*$ is the convex conjugate of $F$. Given a $\tau,\sigma >0$, an initial $(\textbf{u}^0,\textbf{d}^0)\in \mathcal{X}\times \mathcal{Y}$, the Chambolle-Pock algorithm solves the saddle point problem (\ref{pd}) by the following algorithm:
	\begin{align*}
		\tilde{\textbf{d}}^{k+1}&=\textbf{d}^k+\sigma K\bar{\textbf{u}}\,,\\
		\textbf{d}^{k+1}&=\argmin_{\textbf{d}}\Bigg\{\frac{1}{2}\|\textbf{d}-\tilde{\textbf{d}}^{k+1}\|_2^2+\sigma F^*(\textbf{d})\Bigg\}\,,\\
		\tilde{\textbf{u}}^{k+1}&=\textbf{u}^k-\tau K^*\textbf{d}^{k+1}\,,\\
		\textbf{u}^{k+1}&=\argmin_{\textbf{u}}\Bigg\{\frac{1}{2}\|\textbf{u}-\tilde{\textbf{u}}^{k+1}\|_2^2+\tau G(\textbf{u})\Bigg\}\,,\\
		\bar{\textbf{u}}_{k+1} &=\textbf{u}_{k+1}+\theta(\textbf{u}_{k+1}-\textbf{u}_k) \qquad \text{(over-relaxation)} \; ,
	\end{align*}
where $\tau,\sigma$ are the parameters associated with the primal and dual variables, $\theta$ is the over-relaxation parameter.
	\subsection{The Primal-Dual Formulation}
	In this case we have
	\[
	G(\textbf{u})= \int_\Omega |f_t+\nabla f\cdot \textbf{u}|, \quad F(K\textbf{u})=\frac{\eta}{2}\int_\Omega \phi(|\nabla f|)(\nabla\cdot\textbf{u})^2 +\gamma \sum_{i=1}^2 \int_\Omega |\nabla u_i|\,.
	\]
	The Operator $K$ is given as
	\begin{equation*}
		K\textbf{u}=\begin{bmatrix}
			\nabla & 0 \\
			0 & \nabla \\
			\phi\partial_x & \phi\partial_y
		\end{bmatrix}
		\begin{bmatrix}
			u_1\\[0.4cm] u_2
		\end{bmatrix}.
	\end{equation*}
	Therefore,
	\begin{equation*}
		K^*\textbf{d} = -\begin{bmatrix}
			\nabla\cdot & 0 & \partial_x(\phi\cdot) \\
			0 & \nabla\cdot & \partial_y(\phi\cdot)
		\end{bmatrix}
		\begin{bmatrix}
			d_1\\
			d_2\\
			d_3
		\end{bmatrix}.
	\end{equation*}
	The convex conjugate $F^*(\textbf{d})$ is computed as
	\begin{equation*}
		F^*(\textbf{d}) = \frac{1}{2\eta}\|d_3\|^2_2 +\gamma\sum_{i=1}^2 \delta_{B(L^\infty)}(d_i/\gamma)\,,
	\end{equation*}
	Thus the primal-dual formulation is given as
	\begin{equation*}
		\argmin_\textbf{u} \argmax_{\textbf{d}}\:\: |f_t+\nabla f\cdot \textbf{u}|_1 + ( \textbf{u},K^*\textbf{d}) -\frac{1}{2\eta}\|d_3\|^2_2 -\gamma\sum_{i=1}^2 \delta_{B(L^\infty)}(d_i/\gamma)\,.
	\end{equation*}
	Accordingly, the Chambolle-Pock algorithm for this primal-dual problem is given as:
	\begin{align*}
		\tilde{\textbf{d}}^{k+1}&=\textbf{d}^k+\sigma K\bar{\textbf{u}}\,,\\
		\textbf{d}_{1,2}^{k+1}&=\argmin_{\textbf{d}}\Bigg\{\frac{1}{2}\|\textbf{d}-\tilde{\textbf{d}}_{1,2}^{k+1}\|_2^2+\gamma\sigma\delta_{B(L^\infty)}(\textbf{d}/\gamma)\Bigg\}\,,\\
		d_{3}^{k+1}&=\argmin_{d}\Bigg\{\frac{1}{2}\|d-\tilde{d}_{3}^{k+1}\|_2^2+\frac{\sigma}{2\eta}\|d\|_2^2\Bigg\}\,,\\
		\tilde{\textbf{u}}^{k+1}&=\textbf{u}^k-\tau K^*\textbf{d}^{k+1}\,,\\
		\textbf{u}^{k+1}&=\argmin_{\textbf{u}}\Bigg\{\frac{1}{2}\|\textbf{u}-\tilde{\textbf{u}}^{k+1}\|_2^2+\frac{\tau}{2}|f_t+\nabla f\cdot \textbf{u}|\Bigg\}\,,\\
		\bar{\textbf{u}}_{k+1} &=\textbf{u}_{k+1}+\theta(\textbf{u}_{k+1}-\textbf{u}_k).
	\end{align*}
	
	\subsection{The Primal-Dual Algorithm} 
	The $L^1$ norm of the OFC is an affine transformation of the flow $\textbf{u}$ which can be solved directly by soft thresholding formula (see \cite{dirks}) as:
	\begin{equation}
		\textbf{u}^{k+1} = \tilde{\textbf{u}}^{k+1}+
		\begin{cases}
			\tau\nabla f, \qquad\qquad\qquad\qquad\quad \text{ if } f_t+\nabla f\cdot \tilde{\textbf{u}}^{k+1} < -\tau|\nabla f|^2 \\[0.3cm]
			-\tau\nabla f, \qquad\qquad\qquad\quad\quad \text{ if } f_t+\nabla f\cdot \tilde{\textbf{u}}^{k+1} > \tau|\nabla f|^2 \\[0.3cm]
			(f_t+\nabla f\cdot \tilde{\textbf{u}}^{k+1})\dfrac{\nabla f}{|\nabla f|^2}, \quad \text{ otherwise}
		\end{cases}.
		\label{eq}
	\end{equation}
	The optimality conditions for the dual variables can be derived using similar computations in \cite{hirak}. The dual update for $d_{1,2}$ can be obtained by the point-wise projection of $\tilde{d}_{1,2}$ onto $[-\gamma,\gamma]$ given as
	\begin{equation*}
		d_{1,2} = \proj\nolimits_{\sigma/\gamma}(\tilde{d}_{1,2}) = \min(\gamma,\max(-\gamma, \tilde{d}_{1,2})).
	\end{equation*}
The sub-problem for $d_3$ is a linear quadratic minimization problem. The optimality conditions are derived as follows. Let us consider the functional
\begin{equation*}
	J(d_3)=\int_\Omega \frac{1}{2}(d_3-\tilde{d}_3)^2+\frac{\sigma}{2\eta}\int_\Omega (d_3)^2.
\end{equation*}
By computing the directional derivative of $J$ we obtain
\begin{equation*}
	\frac{d}{d\alpha} J(d_3+\alpha \xi)=\int_\Omega \xi(d_3+\alpha\xi-\tilde{d}_3+\frac{\sigma}{\eta}(d_3+\alpha\xi)).
\end{equation*}
The critical points are obtained by evaluating $\frac{d}{d\alpha} J(d_3+\alpha \xi)\Big|_{\alpha=0}=0$. This gives
\begin{equation*}
	\int_\Omega \xi(d_3-\tilde{d}_3+\frac{\sigma}{\eta}d_3)=0.
\end{equation*}
Since $\xi$ is arbitrary we obtain $d_3-\tilde{d}_3+\frac{\sigma}{\eta}d_3=0$. Thus the Chambolle-Pock scheme can be written as:
\begin{align*}
	\tilde{\textbf{d}}^{k+1}&=\textbf{d}^k+\sigma K\bar{\textbf{u}}\,,\\
	\textbf{d}_{1,2}^{k+1}&=\proj\nolimits_\gamma\Big(\tilde{\textbf{d}}_{1,2}^{k+1}\Big)\,,\\
	d_3^{k+1}&=\frac{\eta}{\eta+\sigma}\tilde{d}_{3}^{k+1}\,,\\
	\tilde{\textbf{u}}^{k+1}&=\textbf{u}^k-\tau K^*\textbf{d}^{k+1}\,,\\
	\textbf{u}^{k+1} &= \tilde{\textbf{u}}^{k+1}+
	\begin{cases}
		\tau\nabla f, \qquad\qquad\qquad\qquad\quad \text{ if } f_t+\nabla f\cdot \tilde{\textbf{u}}^{k+1} < -\tau|\nabla f|^2 \\[0.3cm]
		-\tau\nabla f, \qquad\qquad\qquad\quad\quad \text{ if } f_t+\nabla f\cdot \tilde{\textbf{u}}^{k+1} > \tau|\nabla f|^2 \\[0.3cm]
		(f_t+\nabla f\cdot \tilde{\textbf{u}}^{k+1})\dfrac{\nabla f}{|\nabla f|^2}, \quad \text{ otherwise}
	\end{cases},\\
	\bar{\textbf{u}}_{k+1} &=2\textbf{u}_{k+1}-\textbf{u}_k.
\end{align*}
The algorithm for obtaining the flow field using this scheme is presented below.
	\begin{algorithm}[H]
		\caption{}\label{pd1}
		\begin{algorithmic}[1]
			\State Define $\tau,\sigma$
			\State Initialize $\textbf{u}^0\gets$ 0, $\textbf{d}^0\gets$ 0
			\State Initialize matrix $K$
			\Repeat
			\State $\textbf{u}_\text{old}\gets\textbf{u}$
			\State $\tilde{\textbf{d}}\gets \textbf{d}+\sigma K\bar{\textbf{u}}$
			\State $d_{1,2}\gets\proj_{\sigma/\gamma}(\tilde{d}_{1,2})$
			\State $d_3\gets \frac{\eta}{\eta+\sigma}\tilde{d}_3$
			\State Compute matrix $K^*$
			\State $\tilde{\textbf{u}}\gets \textbf{u}-\tau K^*\textbf{d}$
			\State $\textbf{u}\gets \text{softThreshold}(\tilde{\textbf{u}})$
			\State $\bar{\textbf{u}}\gets 2\textbf{u}-\textbf{u}_\text{old}$
			\Until{convergence}
		\end{algorithmic}
	\end{algorithm}
	Chambolle and Pock \cite{cham} also showed that the convergence criterion is fulfilled when $\tau\sigma\|K\|^2< 1,\theta = 1$. We will discuss the choice of $\tau$ and $\sigma$ used in our algorithm in the subsequent section.
	\subsection{Primal-Dual Residuals}
	An useful error metric for the primal-dual algorithm can be obtained from the primal and dual residuals. Let $\textbf{u}^k,\textbf{d}^k$ denote the primal and dual iterates after $k$ iterations. Define $\textbf{u}_{e}:=\textbf{u}^k-\textbf{u}^{k+1}, \textbf{d}_{e}:=\textbf{d}^k-\textbf{d}^{k+1}$ as the error between successive iterates for the primal and dual variables respectively. The primal and dual residuals are defined as:
	\begin{equation*}
		p_{r}^{(k)} := \Big|\frac{\textbf{u}_{e}}{\tau} - K^*\textbf{d}_{e}\Big|,\qquad
		d_{r}^{(k)} := \Big|\frac{\textbf{d}_{e}}{\sigma} - K\textbf{u}_{e}\Big|.
	\end{equation*}
	The normalized error at the $k^{th}$ iteration is then computed as:
	\begin{equation*}
		e^{(k)}=\dfrac{p_{r}^{(k)}+d_{r}^{(k)}}{|\Omega|},
	\end{equation*}
	where $|\Omega|$ is the dimension of the domain $\Omega$.
\section{Numerical Optimization}
\label{sec:5}
In this section we will discuss some of the best practices for improving optical flow estimation. Further, we will discuss different filtering choices used in our work that contribute towards further improvement of the accuracy of the flow.
\subsection{Established Implementation Practices}
There are several factors that contribute significantly to the accuracy of optical flow estimation. These are simple optimization practices that make the optical flow estimation highly accurate and efficient \cite{brox,bruhn,sun1}.

A coarse-to-fine pyramidal grid is employed in the implementation scheme to account for pixels with larger displacements. The first step is the computation of flow at the coarsest level. The estimates obtained at this level are projected to the next finer grid. Using these estimates the second image is warped towards the first image by bi-cubic interpolation. The flow increment is then computed between the first image and the warped image. This process is repeated till the finest level in the grid is reached.

Another important optimization practice is the efficient computation of spatial and temporal image derivatives. The derivative of the second image is computed using a 5-point derivative filter. Using the current flow estimates the second image and its derivative is warped towards the first image using bi-cubic interpolation \cite{sun2}. This step is a departure from the conventional techniques as it involves the flow estimates in the computation of image derivatives. The time derivative is simply the difference between the first image and the warped image. The spatial derivatives are the weighted average of the warped image derivatives and the spatial derivatives of the first image,
\begin{align*}
	(f_x)_{avg} &= r*f^w_x+(1-r)*f_x \\[0.3cm]
	(f_y)_{avg} &= r*f^w_y+(1-r)*f_y,
\end{align*}
where $f^w_x,f^w_y$ denote the spatial derivatives of the warped image, $f_x,f_y$ denote the spatial derivatives of the first image, $r\in (0,1)$ is called the blending ratio.

\subsection{Different Filtering Choices for Improving Estimation}
While the above-mentioned steps significantly contribute to the improvement of the optical flow estimation, they can also generate small outliers in the flow due to interpolation error. To compensate for this issue median filtering is performed at every warping iteration. The median filter effectively removes all the spurious outliers leading to significant improvement in accuracy. However, as discussed before, there are two main drawbacks of the standard median filtering. First it leads to higher energy solutions \cite{sun2} and secondly they have the same asymptotic worst-case mean-squared-error as linear filtering when the SNR is of the order 1 \cite{donoho}. To overcome these challenges we use the heuristic of iterated median filtering over standard median filter to remove outliers. Additionally we will use the weighted filtering principle as a post-processing step to refine the flow edges.
\subsubsection{Iterated Median Filtering}
The concept of iterated median filtering relies upon repeated applications of median filters. Formally we can say that
\begin{equation*}
	M_{\text{iter}}(f): = \med_{h_k}\circ\dots\circ\med_{h_1}(f).
\end{equation*}
Here $f$ is the given image, $h_i, i=1,\dots,k$ is the window size, $\circ$ denotes the composition operation. For each pass the window size can be chosen accordingly. According to \cite{sun1}, an optimal filter size is $5\times 5$. Let us describe the process for two-stage median filtering. Let $f_1$ denote the coarser version of the image $f$. Let $h_1$ and $h_2$ be the window sizes of the filter. The process can be described in three steps:
\begin{enumerate}
	\item $M_{h_1}(f_1):=\med_{h_1}(f_1)$.
	\item upscale $M_{h_1}(f_1)$ using interpolation.
	\item $M_{h_1,h_2}(f)=\med_{h_2}(M_{h_1}(f_1)).$
\end{enumerate}
The median filter is first applied at the coarser level. The resulting image is upsampled to the next finer level where another pass of median filtering is applied.
\begin{figure}[H]%
	\centering
	\subfloat[Noisy image (Grove 2)]{{\includegraphics[width=4.0cm]{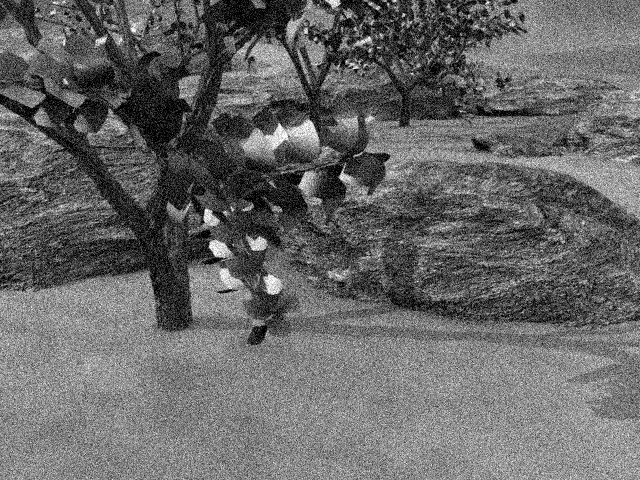}}}%
	\qquad
	\subfloat[Denoised image with Median Filtering, PSNR = 25.12]{{\includegraphics[width=4.0cm]{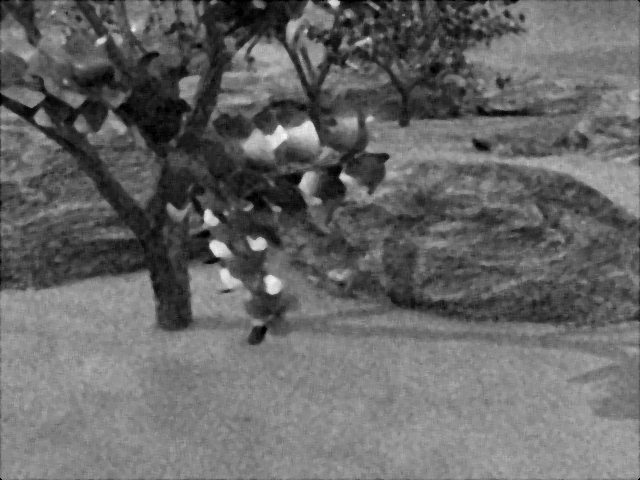}}}%
	\qquad
	\subfloat[Denoised image with Iterated Median Filtering, PSNR = 26.06]{{\includegraphics[width=4.0cm]{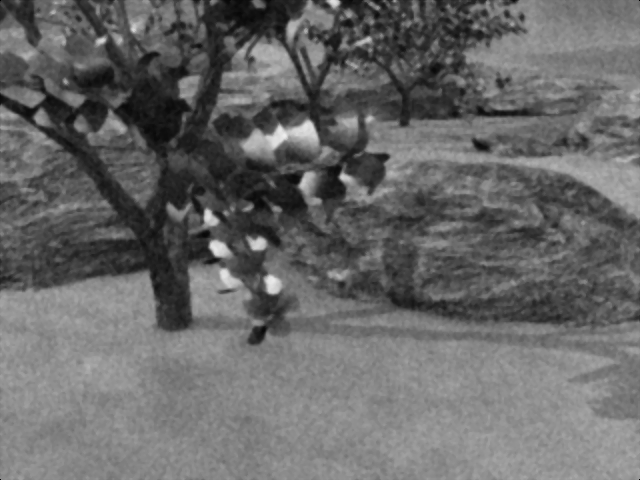}}}\\
	\subfloat[Noisy image (Rubberwhale)]{{\includegraphics[width=4.0cm]{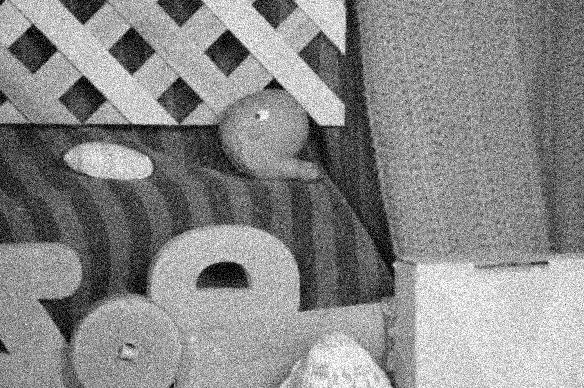}}}%
	\qquad
	\subfloat[Denoised image with Median Filtering, PSNR = 28.87]{{\includegraphics[width=4.0cm]{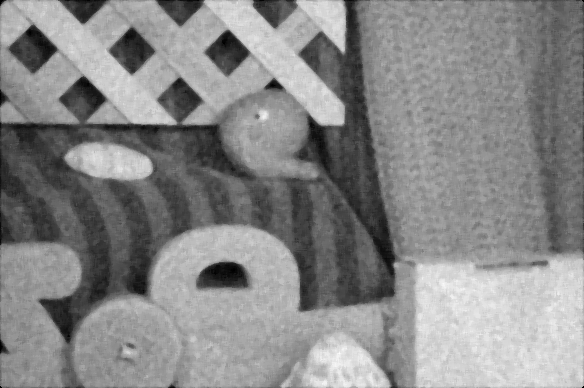}}}%
	\qquad
	\subfloat[Denoised image with Iterated Median Filtering, PSNR = 30.05]{{\includegraphics[width=4.0cm]{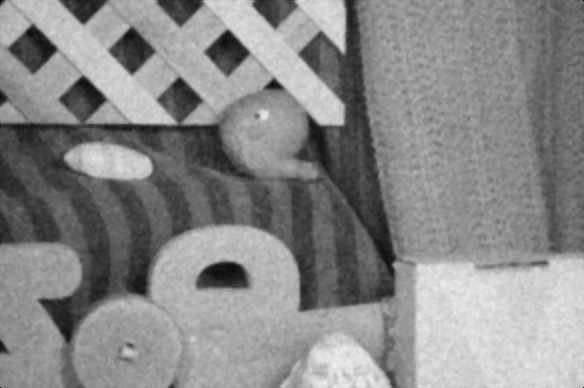}}}
	\caption{Behaviour of two filtering technqiues under \emph{Gaussian} noise with variance 0.1.}
	\label{f9}
\end{figure}
We performed experiments on a few images namely the \emph{grove 2} and \emph{rubberwhale} from the middlebury datasets under varying noise densities. We used the PSNR metric for a qualitative evaluation. The results in Figure (\ref{f9}) show that the iterated median filtering produces better results.

To further demonstrate the advantages of iterated median filtering over standard median filtering for noisy image sequences in optical flow problems, we created a synthetic image sequence of size $200\times200$. A circular ball of radius 100 pixels centered at the origin is superimposed on the background. The second frame is obtained from the first frame by following a translational motion of this circular ball to the right by 4 pixels.
\begin{figure}[H]%
	\centering
	\subfloat{{\includegraphics[width=4cm]{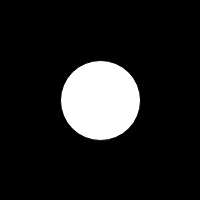}}}%
	\qquad
	\subfloat{{\includegraphics[width=4cm]{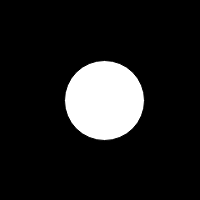}}}%
	\caption{Synthetic sequence}%
	\label{f10}%
\end{figure}
For convenience, the following demonstration is shown for the standard $L^2-TV$ optical flow algorithm. It relies upon an useful approximation of $\Delta_1$ discussed in \cite{adam}, where the author present an explicit convergent, monotone scheme given by
\begin{equation*}
	\Delta_1u\approx 2\frac{u(x)-u_*}{dx^2}+\mathcal{O}(dx^2+d\theta)
\end{equation*}
where $u(x)$ is the value at the centre of the grid, $u_*$ denotes the median value of the neighbors in the grid, $dx$ is the spatial resolution and $d\theta$ is the directional resolution. For images, which in general have a natural discrete structure, $d\theta$ is fixed at $\pi/4$.

\begin{figure}[H]%
	\hspace*{-1.5cm}
	\subfloat[Standard median filtering with gaussian noise]{{\includegraphics[width=7cm]{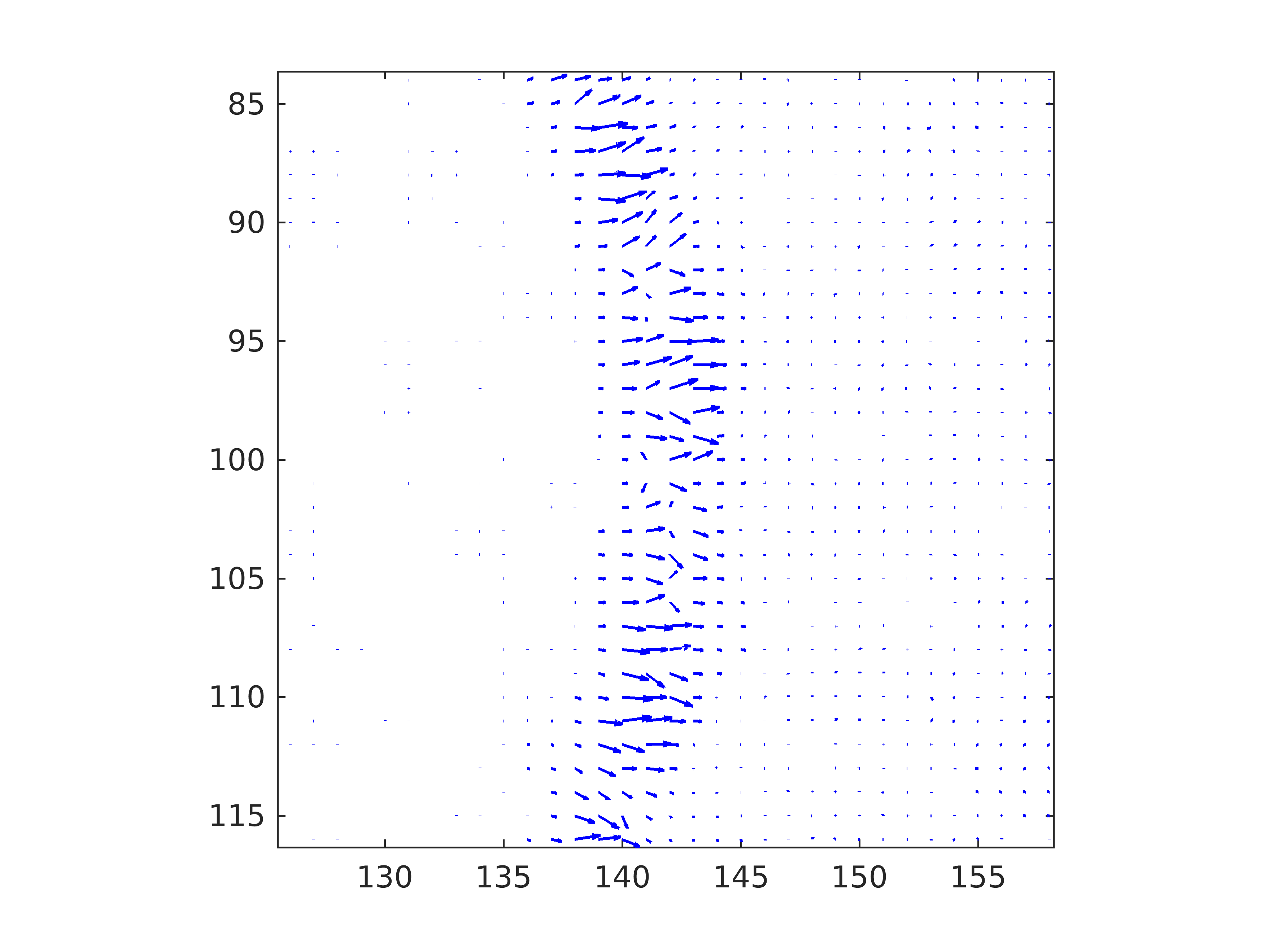}}}%
	\qquad
	\subfloat[Iterated median filtering with Gaussian noise]{{\includegraphics[width=7cm]{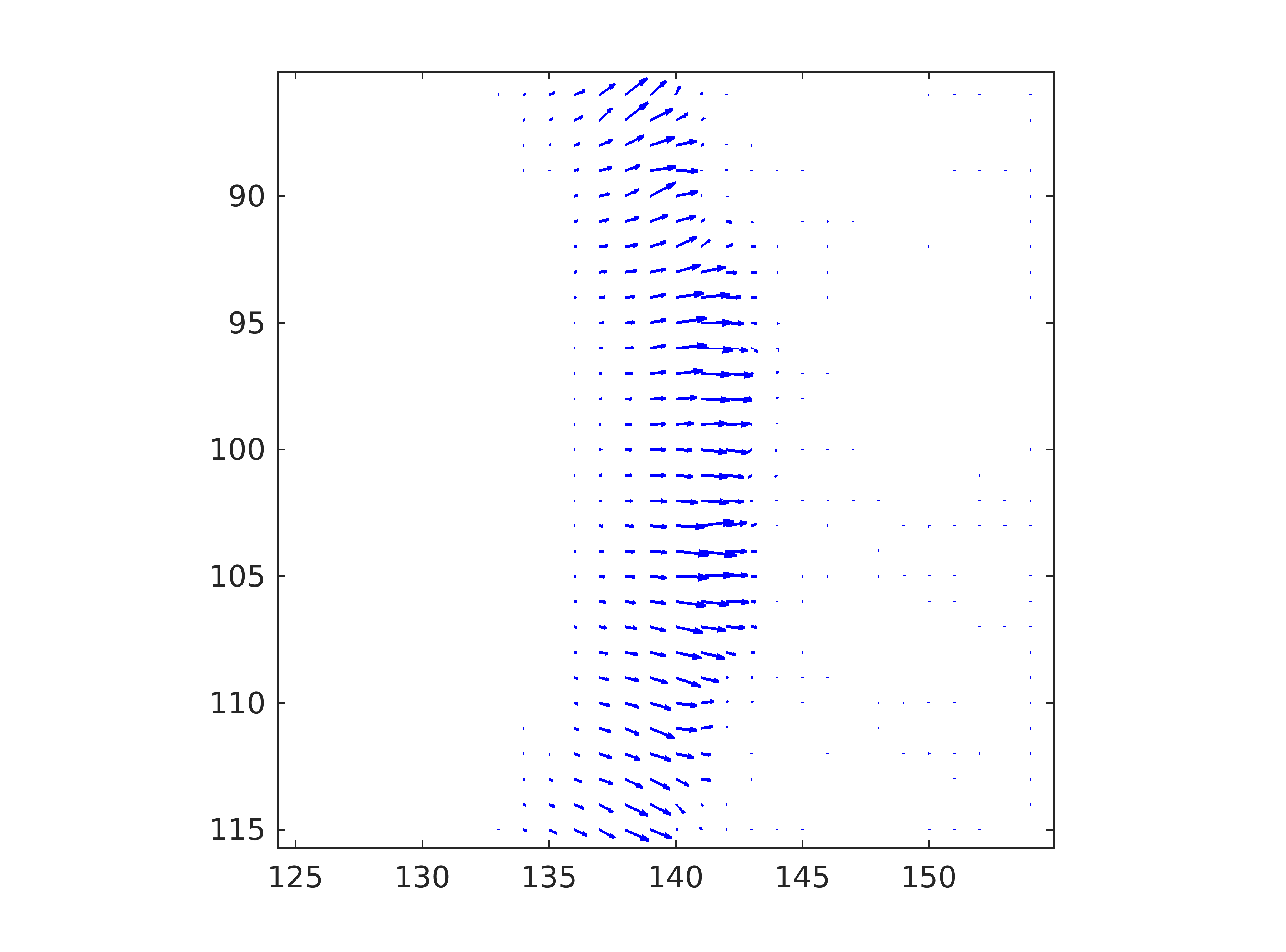}}}%
	\qquad
	\hspace*{-0.2cm}
	\subfloat[Standard median filtering with salt and pepper noise]{{\includegraphics[width=7cm]{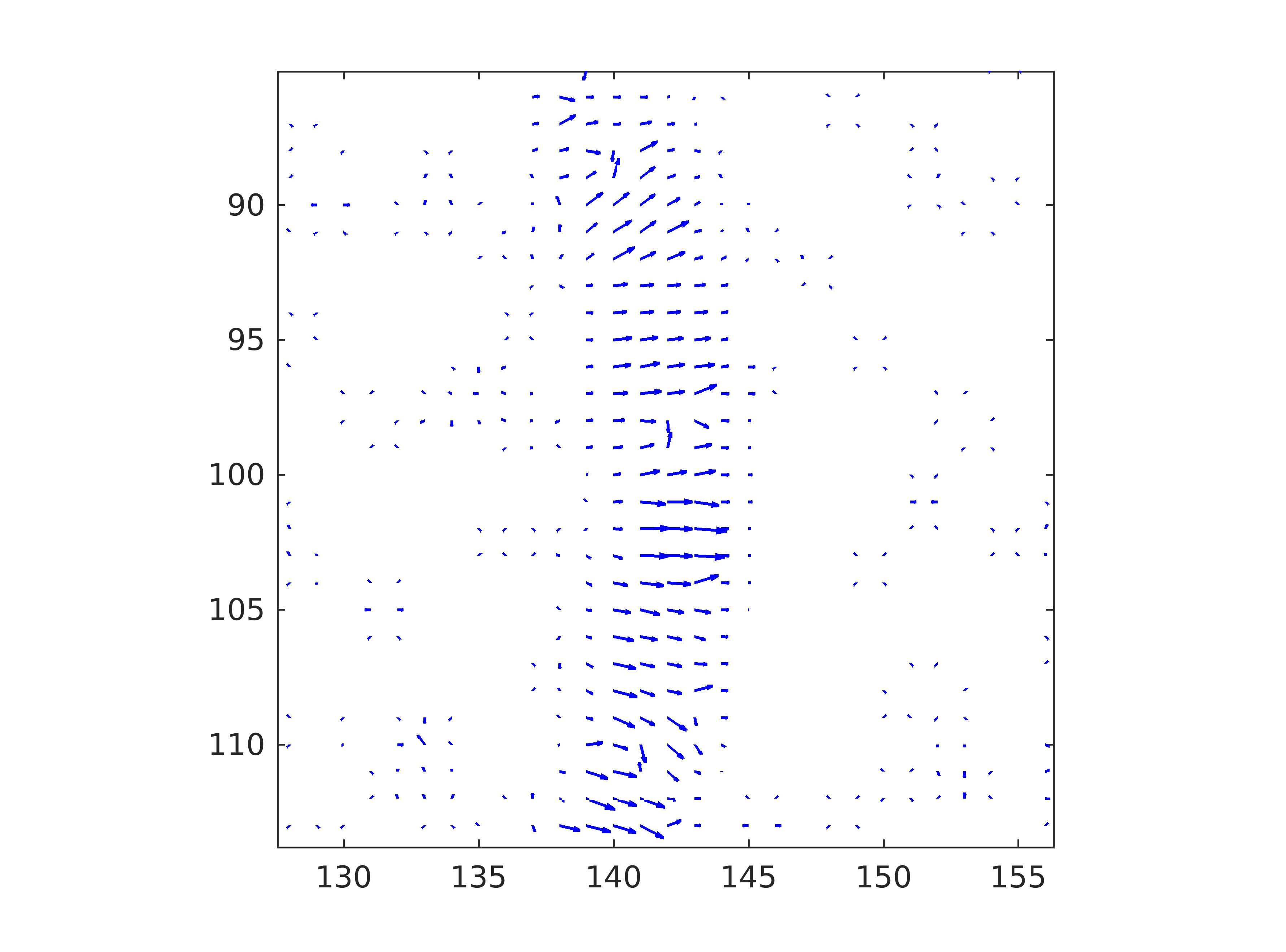}}}%
	\qquad
	\subfloat[Iterated median filtering with salt and pepper noise]{{\includegraphics[width=7cm]{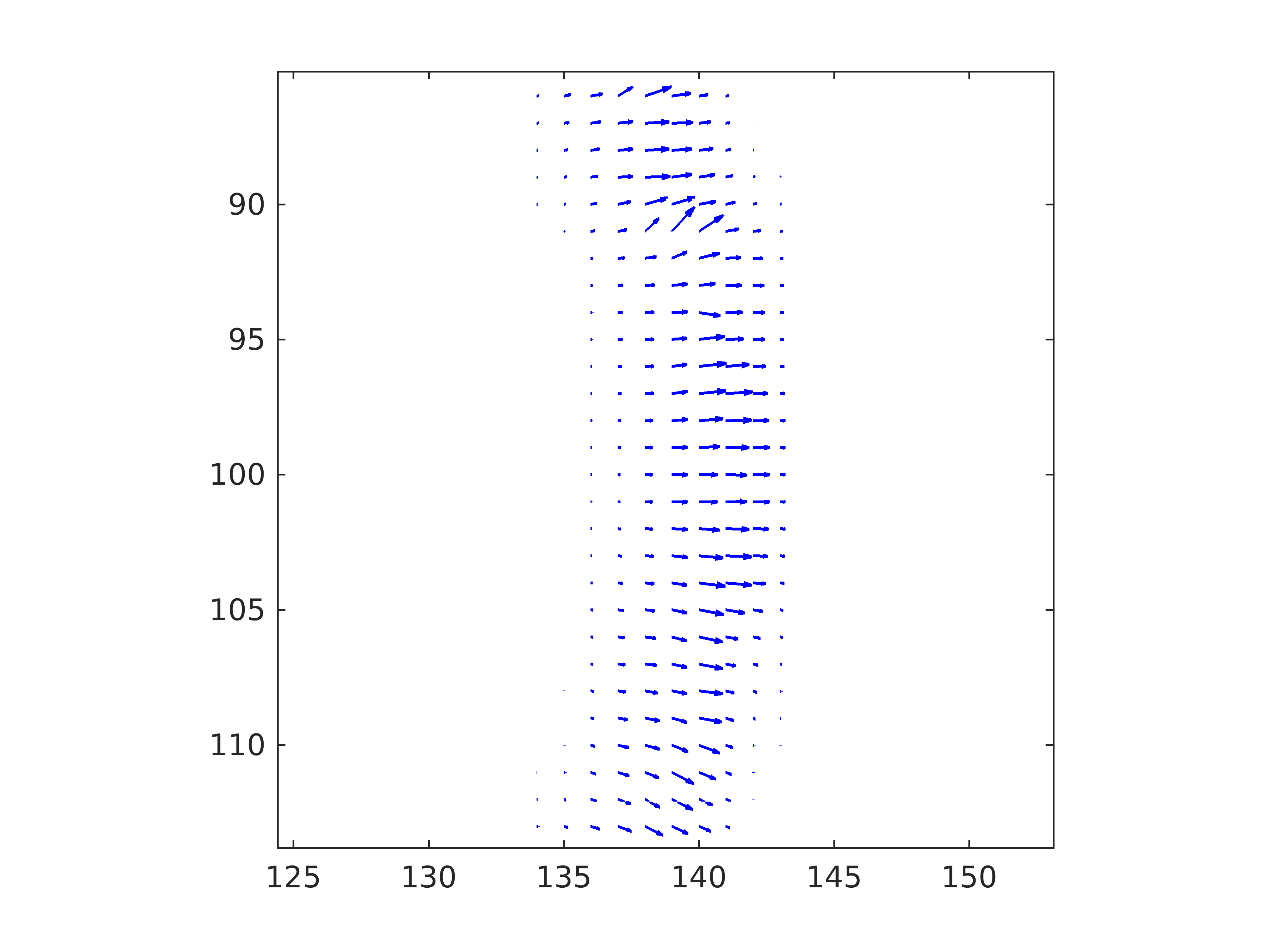}}}
	\caption{Comparison of standard median filtering with iterated median filtering with different noise levels.}
	\label{f1}
\end{figure}
From the results above it can be clearly seen that the iterated median filtering technique clearly outperforms standard median filtering in the presence of noise. Additionally it preserves the direction of motion vectors.
\subsubsection{Weighted Median Filtering}
To deal with the higher energy solutions by application of median filter, an optimization technique is employed where the classical objective function is modified with a new \emph{non-local} term, \cite{sun1,sun2,sana,zhang2}. The \emph{non-local} term governs the smoothness within a specified region. These models are classified under NL-based models. Li and Osher \cite{li} established a formal connection between the \emph{non-local} term and weighted median filtering for $L^1$-based energy minimization.

The weighted median filtering technique aims to minimize an objective of the form
\begin{equation*}
	\min_x E(x):=\sum_{i} w_i (x-u_i)+F(x),
\end{equation*}
where $w_i$ are the non-negative weights, $u_i$ are non-negative values. For a specific application to PDE-based image denoising, Li and Osher suggested the following weight function:
\begin{equation*}
	w(x,y)=\exp\Bigg(-\frac{1}{h^2}\int_\Omega G_\delta(t)|f(x+t)-f(y+t)| dt\Bigg),
\end{equation*} 
where $G_\delta$ is the Gaussian with the standard deviation $\delta$, $f$ is the input image, $h$ is the grid size. In our work, we do not modify the objective functional directly. Instead, we use the weighted median filter as a post-processing step to refine the flow estimate obtained by solving the primal-dual algorithm during coarse-to-fine estimation.
\begin{figure}[H]%
	\centering
	\subfloat{{\includegraphics[width=7cm]{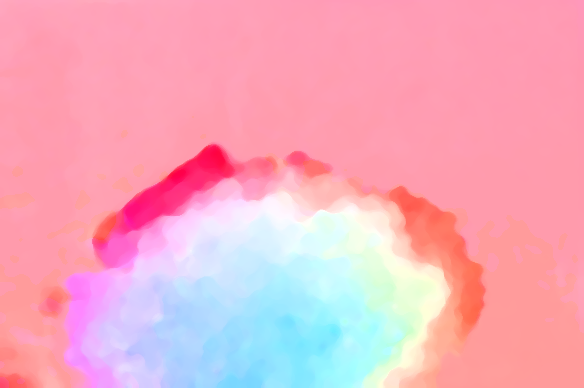}}}%
	\qquad
	\subfloat{{\includegraphics[width=7cm]{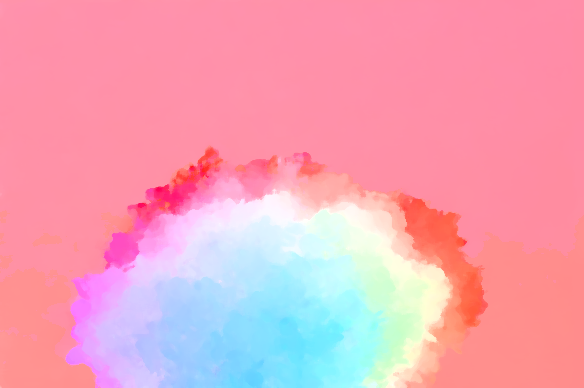}}}%
	\caption{Improving flow edges using weighted median filtering.}%
	\label{f8}%
\end{figure}
Figure (\ref{f8}) shows the improvement of the flow edges for the \emph{hydrangea} sequence using weighted median filter.
\section{Experiments and Results}
\label{sec:6}
\subsection{Evaluation Metrics}
The most commonly used metrics in the literature for a quantitative evaluation of optical flow methods are the Average Angular Error (AAE) and the average End Point Error (EPE). The AAE is computed as
\begin{equation*}
	AAE = \frac{1}{|\Omega|}\int_\Omega \frac{\textbf{u}_c\cdot\textbf{u}_e}{|\textbf{u}_c||\textbf{u}_e|}\:dxdy,
\end{equation*}
where $\textbf{u}_c$ is the computed flow and $\textbf{u}_e$ is the exact flow. This metric was first used by Barron et.al. \cite{barron} in their pioneering work where they evaluted the performance of several existing optical flow models. The EPE is computed as
\begin{equation*}
	EPE = |\textbf{u}_e-\textbf{u}_c|=\sqrt{(u_1^e-u_1^c)^2 + (u_2^e-u_2^c)^2},
\end{equation*}
where $(u_1^e,u_2^e)$ is the exact optical flow and $(u_1^c,u_2^c)$ is the computed optical flow.

\subsection{Experiments on Middlebury Dataset}
The middlebury dataset is one of the most commonly used dataset for evaluating various optical flow methods. There are 8 sequences in this dataset with a publicly available ground-truth information.
\begin{figure}[H]%
	\centering
	\subfloat{{\includegraphics[width=5.5cm]{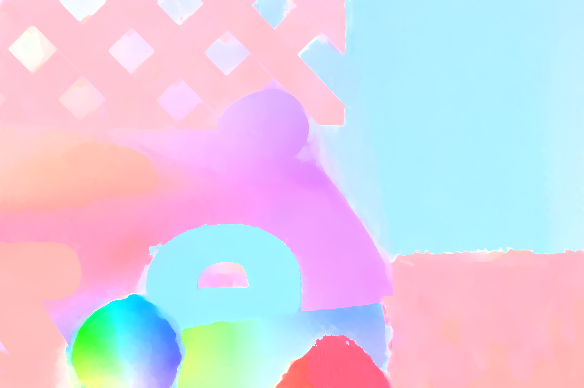}}}%
	\quad
	\subfloat{{\includegraphics[width=5.5cm]{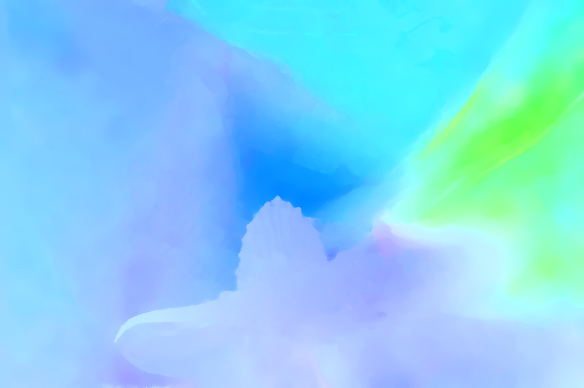}}}
	\quad
	\subfloat{{\includegraphics[width=5.5cm]{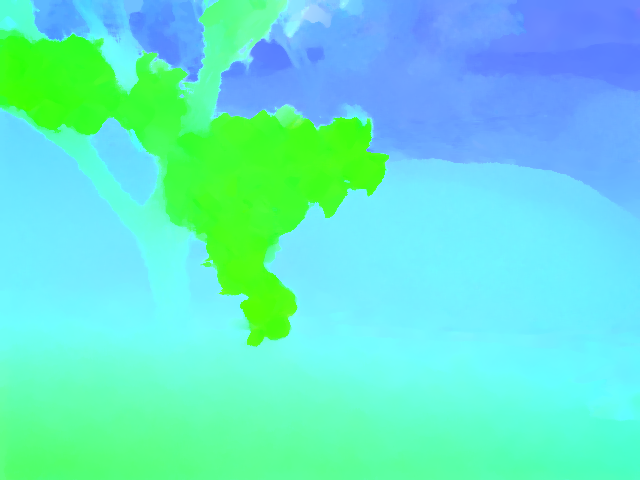}}}
	\caption{Estimated flow field for the \emph{rubberwhale}, \emph{dimetrodon} and \emph{grove2} sequence.}%
	\label{f4}%
\end{figure}
\begin{table}[H]
	\renewcommand{\arraystretch}{2.0}
	\begin{adjustbox}{width=1\textwidth}
		\begin{tabular}{p{3.0cm}cccccccc}
			\hline
			\multirow{2}{7cm}{} & Dimetrodon &Rubberwhale & Hydrangea & Urban 2 & Urban3 & Grove 2 & Grove 3 & Venus \\
			  & AAE / EPE  & AAE / EPE & AAE / EPE & AAE / EPE & AAE / EPE & AAE / EPE & AAE / EPE & AAE/EPE\\
			\hline
			HS+NL\cite{sun2}  & 4.733 / 0.230 & 4.992 / 0.154 & 2.890 / 0.250 & 4.847/0.566 & 6.659/0.742 & 2.580/0.185 & 6.187/0.640 & 5.498/0.333\\
			HS+NL+GF\cite{zhang2}  & 4.278 / 0.208 & 4.667 /0.143 &2.567 / 0.430 & 4.525/0.547 & 6.521/\textbf{0.723} & \textbf{2.360}/\textbf{0.166} & \textbf{6.005}/\textbf{0.632} & 5.140/0.310\\ 
			Our method  &  \textbf{2.805} / \textbf{0.142} & \textbf{2.989} / \textbf{0.100} & \textbf{2.135} /  \textbf{0.191} & \textbf{2.997}/\textbf{0.409} & \textbf{5.788}/0.858 & 2.885/0.195 & 6.871/0.716 &\textbf{3.861}/\textbf{0.280}  \\ \hline
		\end{tabular}
	\end{adjustbox}
	\caption{\small{Comparison of the Average Angular Error (AAE) and End Point Error (EPE) on the midddlebury sequences.}}
	\label{t1}
\end{table}
For a more convinving comparison, we compared our results with the some of the state-of-the-art modified HS-based formulations. The first model is the HS model modified by introducing the \emph{non-local} term \cite{sun2} combined with the state-of-the-art implementation practices. The second model is an improvement of the first model by the application of guided filtering (GF) techniques, see \cite{zhang2}. The results obtained are indicated in Table \ref{t1}. The lowest error obtained in each case are indicated in bold. Table \ref{t2} shows that our method achieves the lowest average error compared to the other methods.

\begin{table}[H]
	\centering
\begin{tabular}{cc}
	\hline
	\multirow{2}{3cm}{} & Average \\
	& AAE / EPE \\
	\hline
	HS+NL\cite{sun2} & 4.798/0.388 \\[0.3cm]
	HS+NL+GF\cite{zhang2} & 4.508/0.370\\[0.3cm]
	Our Method & \textbf{3.791}/\textbf{0.362}\\
	\hline
\end{tabular}
\caption{Average errors of the compared methods on the middlebury dataset.}
\label{t2}
\end{table}
\subsection{Choice of Parameters}
To obtain the desired edge-preserving flow estimate requires fine-tuning of multiple free parameters. In this section we will discuss about some of the choices of free parameters in our algorithm.

For the primal-dual algorithm, we performed experiments with different values of the parameters, $\tau$ and $\sigma$ ranging from $0.1$ to $1$. Chambolle and Pock \cite{cham} showed that the numerical scheme converges when $\tau\sigma\|K\|^2<1$. The best results were obtained for $\tau=1,\sigma =0.9$.

\begin{figure}[H]%
	\hspace*{-1.4cm}
	\subfloat{{\includegraphics[width=6cm]{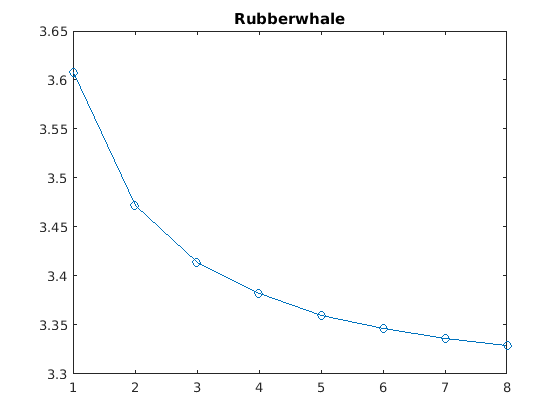}}}%
	\subfloat{{\includegraphics[width=6cm]{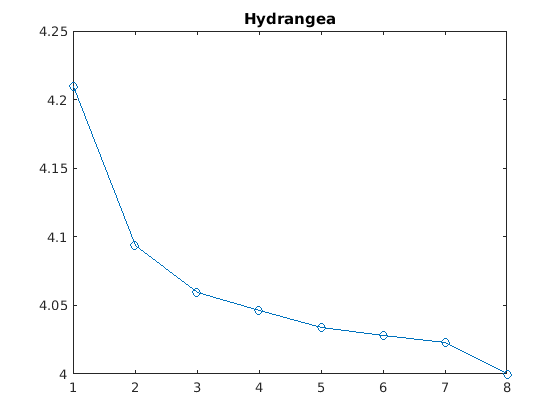}}}%
	\subfloat{{\includegraphics[width=6cm]{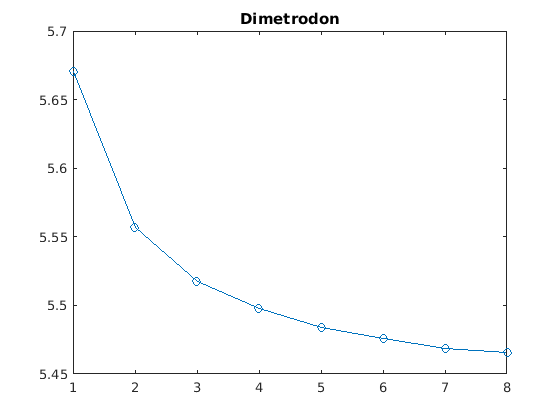}}}
	\caption{Normalized error computed with the primal-dual residuals for the \emph{rubberwhale}, \emph{hydrangea} and \emph{dimetrodon} sequence.}%
	\label{f6}%
\end{figure}
Chambolle and Pock further introduced a primal-dual gap $\mathcal{G}$ for convergence analysis. They observed that an order $\mathcal{O}(1/N)$ can be achieved for $\mathcal{G}$ when $G$ and $F^*$ have full domain.

For implementing the iterated median filtering, we experimented with different window sizes $3,5,7$ during the coarse-to-fine process. We observed that a window size of $5$ at the coarse level and a window size of $3$ at the finer level produced best results.

Experiments were done with different values of $\gamma$ and $\eta$ ranging from 0.01 to 10. A higher value of $\eta$ led to over-smoothening of the edges. Keeping $\gamma$ fixed we varied $\eta$ between $0.01, 0.1, 0.5, 1, 10$. The best results were obtained for $\eta = 0.01$. Next for this $\eta$ we varied $\gamma$ and found the optimal choice for $\gamma=1$.

The pyramid levels can be adaptively determined by the formula, see \cite{sun1},
\begin{equation*}
	P_{\text{lev}} = 1+\left\lfloor\frac{\log\Big(\frac{\min\{m,n\}}{16}\Big)}{\log(P_{\text{sp}})}\right\rfloor,
\end{equation*}
where $P_{\text{sp}}$ denotes the pyramid spacing, $m,n$ denote the flow dimensions at each pyramid level and $\left\lfloor\cdot\right\rfloor$ indicates the floor function. We used $P_{\text{lev}}=5$ in our experiments. At each pyramid level, 10 warping iterations were performed.

For the weighted median filter, we used the weight $w$ specified by Li and Osher. The weights are restricted to a search window $|x-y|_\infty\le R$, where denotes the window size, see \cite{li}. We performed several experiments on $(\delta,R)$, where $\delta$ is the standard deviation defined previously. A candidate set for R was $4,5,7,9,10,11,13$. The best results for the following values: $(13,7)$ for \emph{grove2} sequence, $(4,70)$ for \emph{grove3} sequence and $(10,7)$ for the remaining sequences.

\section*{Conclusion}
In this work, we have proposed a variational optical flow model for an efficient edge-preserving optical flow estimation. An effective numerical scheme was developed using the Chambolle-Pock primal dual algorithm. The heuristic of iterated median filtering and weighted median filtering was incorporated to improve the flow accuracy.

Our key findings reveal that an efficient numerical scheme can be developed by incorporating the heuristic of iterated median filtering. However one must be careful in determining the number of iterations at each pyramid level. We observed that beyond 3 iterations the results were not so good visually. This was also visible in the AAE and EPE. The parameter $\eta$ associated with the additional constraint term must be small for better accuracy. Additionally, we found that the Chambolle-Pock algorithm provides an effective numerical scheme which can be adapted for a large class of non-smooth convex optimization problems.

Our work further goes on to show that classical formulations are still fairly competitive and produce good results when incorporated with modern optimization practices.

\section*{Acknowledgements}
The authors dedicate this paper to Bhagawan Sri Sathya Sai Baba, Revered Founder Chancellor, SSSIHL.


\begin{thebibliography}{30}
	
	\bibitem{middle}
	\url{https://vision.middlebury.edu/flow/data/}
	
	\bibitem{Aubert}
	G. Aubert, R. Deriche, P. Kornprobst, Computing Optical Flow via Variational Techniques, \emph{SIAM Journal of Applied Mathematics}, Vol. 60, 156-182, 1999,  \url{DOI:10.1137/S0036139998340170}.
	
	\bibitem{Aubert2}
	G. Aubert, P. Kornprobst, Mathematical Problems in Image Processing: Partial Differential Equations and the Calculus of Variations, 2nd Edition, Springer, 2006.
	
	\bibitem{baker}
	S. Baker, D. Scharstein, J. Lewis, S. Roth, M.J. Black, R. Szeliski, A Database and Evaluation Methodology for Optical Flow, \emph{IEEE International Comference on Computer Vision}, 1-8, 2007.
	
	\bibitem{barron}
	J.L. Barron, J. Fleet, S. Beauchemin, Performance of Optical Flow Techniques, \emph{International Journal of Computer Vision}, 12(1), 43-77, 1994, \url{https://doi.org/10.1007/BF01420984}.
	
	\bibitem{black}
	M.J.Black, P. Anandan, The Robust Estimation of Multiple Motions: Parametric and Piecewise-Smooth Flow Fields, \emph{Computer Vision and Image Understanding}, 63, 75-104, 1996.
	
	\bibitem{brox}
	T. Brox, A. Bruhn, N. Papenberg, J. Weickert, High Accuracy Optical Flow Estimation Based on a Theory of Warping, \emph{European Conference of Computer Vision}, 25-36, 2004, \url{https://doi.org/10.1007/978-3-540-24673-2_3}.
	
	\bibitem{bruhn}
	A. Bruhn, J. Weickert, Towards Ultimate Motion Estimation: Combining Highest Accuracy with Real-time Performance, \emph{Tenth IEEE International Conference on Computer Vision}, Vol. 1, 749-755, 2005, \url{DOI: 10.1109/ICCV.2005.240}.
	
	\bibitem{bruhn2}
	A. Bruhn, J. Weickert, C. Schn\"{o}rr, Lucas/Kanade meets Horn/Schunck: Combining Local and Global Optic Flow Methods, \emph{International Journal of Computer Vision}, 61(3), 211-231, 2005.
	
	\bibitem{morel}
	A. Buades, B. Coll, J. Morel, A Non-local Algorithm for Image Denoising, \emph{IEEE Internation Conference on Computer Vision and Pattern Recognition}, 2, 60-65, 2005.
	
	\bibitem{donoho}
	E.Arias-Castro, D.L. Donoho, Does Median Filtering Truely Preserve Edges Better Than Linear Filtering ?, \emph{The Annals of Statistics}, 37(3), 1172-1206, 2009, \url{DOI: 10.1214/08-AOS604}.
	
	\bibitem{cham}
	A.~Chambolle and T.~Pock, A first-order primal-dual algorithm for convex problems with applications in imaging, {\em Journal of Mathematical Imaging and Vision}, 40:120--145, 2011, \url{DOI:10.1007/s10851-010-0251-1}.
	
	\bibitem{chen2}
	G-Q. Chen, H. Frid, Divergence-Measure Fields and Hyperbolic Conservation Laws, \emph{Arch. Rational Mech. Anal.}, 147, 89-118, 1999. 
	
	\bibitem{chen}
	Z. Chen, H. Jin, Z. Lin, S. Cohen, Y. Wu, Large Displacement Optical Flow from Nearest Neighbor Fields, \emph{IEEE Proc. Comp. Vis. Pattern Recognit.}, 2443-2450, 2013.
	
	\bibitem{evans}
	L.C. Evans R.F. Gariepy, Measure Theory and Fine Properties of Functions, CRC Press, 2015.
	
	\bibitem{vogel}
	A. Acar, C.R. Vogel, Analysis of Bounded Variation Penalty Methods for Ill-Posed Problems,\emph{Inverse Problems}, 10, 1217-1229, 1994, \url{https://doi.org/10.1088/0266-5611/10/6/003}.
	
	\bibitem{Cohen}
	I. Cohen, Nonlinear Variational Method for Optical Flow Computation, \emph{Proceedings of the 8th SCIA}, 523-530, 1993.
	
	\bibitem{dirks}
	H. Dirks, \emph{Variational Methods for Joint Motion Estimation and Image Reconstruction}, Ph.D. Thesis, Wilhems-Universit\"{a}t, 2015.
	
	\bibitem{hirak}
	H. Doshi, N. U. Kiran, Nonlinear Evolutionary PDE-Based Refinement of Optical Flow, \emph{Machine Graphics and Vision}, 30(1/4), 2021, \url{DOI: https://doi.org/10.22630/MGV.2021.30.1.3}.
	
	\bibitem{gold}
	T. Goldstein, E. Esser, R. Baraniuk, Adaptive Primal-Dual Hybrid Gradient Methods for Saddle Point Problems, \emph{arXiv:1305.0546}, 2013.
	
	\bibitem{Hinterberger}
	W. Hinterberger, O. Scherzer, C. Schn\"{o}rr, J. Weickert, Analysis of Optical Flow Models in the Framework of Calculus of Variations, \emph{Numer. Funct. Anal. and Optimiz.}, 23(1\&2), 69-89, 2002, \url{https://doi.org/10.1081/NFA-120004011}.
	
	\bibitem{Horn}
	B.K.P. Horn, B.G. Schunck, Determining Optical Flow, \emph{Artificial Intelligence}, Vol. 17, 185-203, 1981, \url{https://doi.org/10.1016/0004-3702(81)90024-2}.
	
	\bibitem{li}
	Y. Li, S. Osher, A New Median Formula with Applications to PDE Based Denoising, \emph{Comm. Math. Sci.}, 7(3), 741-753, 2009.
	
	\bibitem{martin2}
	A. Martin, E. Schiavi, S.S. de L\'{e}on, On 1-Laplacian elliptic equations modeling magnetic resonance imaging Rician denoising, \emph{Journal of Mathematical Imaging and Vision}, 57:202-224, 2017, \url{DOI:10.1007/s10851-016-0675-3}.
	
	\bibitem{nagel}
	H.H. Nagel, On the Estimation of Optical Flow: Relations between Different Approaches and Some New Results, \emph{Artificial Intelligence}, 33, 299-324, 1987.
	
	\bibitem{nagelenk}
	H.H.~Nagel, W. Enkelmann, An Investigation of the Smoothness Constraint for the Estimation of Displacement Vector Fields From Image Sequences, \emph{IEEE. Tran. Pattern Analysis and Machine Intelligence}, 8, 565-593, 1986.
	
	\bibitem{adam}
	A.M.~Oberman, A convergent monotone difference scheme for motion of level sets by mean curvature, Numer. Math. 99, 365-379, 2004.
	
	\bibitem{perona}
	P. Perona, J. Malik, Scale-Space and Edge Detection using Anisotropic Diffusion, \emph{IEEE Tran. on Pattern Analysis and Machine Intelligence}, 12(7), 629-639, 1990.
	
	\bibitem{rudin}
	L.I. Rudin, S. Osher, E. Fatemi, Nonlinear Total Variation Based Noise Removal Algorithms, \emph{Physica D}, 60(1-4), 259-268, 1992. 
	
	\bibitem{sana}
	S. Rao, H. Wang, Robust Optical Flow Estimation via Edge-Preserving Filtering, \emph{Signal Processing: Image Communication}, 96, 116309, 2021 \url{doi:10.1016/j.image.2021.116309}.
	
	\bibitem{sand}
	P. Sand, S. Teller, Particle Video: Long-Range Motion Estimation using Point Trajectories, \emph{International Journal of Computer Vision}, 80, 72-91, 2008, \url{DOI 10.1007/s11263-008-0136-6}.
	
	\bibitem{Schnorr}
	C. Schn\"{o}rr, Determining optical flow for irregular domains by minimizing quadratic functionals of a certain class, \emph{International Journal of Computer Vision}, Vol. 6, 25-38, 1991, \url{https://doi.org/10.1007/BF00127124}.
	
	\bibitem{sun1}
	D. Sun, R. Roth, M.J. Black, Secrets of Optical Flow Estimation and their Principles, \emph{IEEE Computer Society Conference on Computer Vision and Pattern Recognition}, 2432-2439, 2010, \url{DOI: 10.1109/CVPR.2010.5539939}.
	
	\bibitem{sun2}
	D. Sun, S. Roth, M.J. Black, A Quantitative Analysis of Current Practices in Optical Flow Estimation and the Principles behind them, \emph{International Journal of Computer Vision}, 106(2), 115-137, 2014, \url{https://doi.org/10.1007/s11263-013-0644-x}.
	
	\bibitem{wang}
	B. Wang, Z. Cai, L. Shen, T. Liu, An Analysis of Physics-Based Optical Flow, \emph{Journal of Computational and Applied Mathematics}, 276, 62-80, 2015, \url{https://doi.org/10.1016/j.cam.2014.08.020}.
	
	\bibitem{weickert}
	J. Weickert, Anisotropic Diffusion in Image Processing, Vol. 1, Stuttgart, Teubner, 1998.
	
	\bibitem{wedel}
	A. Wedel, T. Pock, C. Zach, D. Cremers, H. Bischof, An Improved Algorithm for TV-$L^1$ Optical Flow, \emph{Dagstuhl Motion Workshop}, 2008, \url{https://doi.org/10.1007/978-3-642-03061-1_2}.
	
	\bibitem{wu}
	B. Wu, E.A. Ogada, J. Sun, Z. Guo, A Total Variation Model Based on the Strictly Convex Modification for Image Denoising, \emph{Abstract and Applied Analysis}, ID: 948392, 2014, \url{http://dx.doi.org/10.1155/2014/948392}.
	
	\bibitem{zhang}
	C. Zhang, L. Zhu, Z. Chen, D. Kong, X. Shang, An Improved Evaluation  Method for Optial Flow of Endpoint Error, \emph{International Conference on Computer Networks and Communication Technology}, 312-317, 2016.
	
	\bibitem{zhang2}
	C. Zhang, L. Ge, Z. Chen, R. Qin, M. Li, W. Liu, Guided Filtering: Toward Edge-Preserving for Optical Flow, \emph{IEEE Access}, Vol. 6, 26958-26971, 2018, \url{10.1109/ACCESS.2018.2831920}.
\end{thebibliography}
\end{document}